\documentclass[11pt]{article}
\usepackage[margin=1in]{geometry}
\usepackage[T1]{fontenc}
\usepackage{amsmath,amsthm}
\usepackage{newtxtext,newtxmath,microtype}
\usepackage{booktabs,graphicx,array,enumitem}
\usepackage[numbers,sort&compress]{natbib}
\usepackage{xcolor}
\usepackage[colorlinks=true,linkcolor=blue!45!black,citecolor=blue!45!black,urlcolor=blue!45!black]{hyperref}
\usepackage{fancyhdr}
\usepackage[font=small,labelfont=bf]{caption}
\usepackage{placeins,listings}
\graphicspath{{figures/}}
\newtheorem{theorem}{Theorem}
\newtheorem{proposition}[theorem]{Proposition}
\newtheorem{corollary}[theorem]{Corollary}
\newcommand{\R}{\mathbb{R}}
\newcommand{\E}{\mathbb{E}}
\newcommand{\one}{\mathbf{1}}
\newcommand{\tr}{\operatorname{tr}}
\newcommand{\diag}{\operatorname{diag}}
\newcommand{\norm}[1]{\left\lVert#1\right\rVert}

\setlist{nosep}
\title{\vspace{-1em}\textbf{Contingent Exposure Routing for Financial AI:}\\
Outage Risk and the Cost of Indivisible Decisions}
\author{Shivam Gupta\\
\small Independent Researcher\\
\small \href{mailto:shivam1720406@gmail.com}{shivam1720406@gmail.com}}
\date{23 September 2026}
\begin{document}
\maketitle
\begin{abstract}
Model failover restores availability, but changes which financial institutions share decision errors. We formulate outage-contingent routing through a local market-impact response matrix and study expected squared price displacement. A symmetric construction shows that a shared backup can leave an order-one concentration floor as the number of primary endpoints grows, while balanced fallback risk decreases inversely with the surviving endpoint count. For indivisible decisions, we derive the exact second moment of independent randomized routing and an effective-exposure granularity that determines its gap from fractional allocation. Conditional-expectation rounding gives a finite-agent bound without coupled quotas; a separate swap procedure preserves endpoint counts and is assessed against dual lower bounds. Across 60 synthetic portfolio networks and 11,340 scenario evaluations, the latter reduces risk by 6.57\% and 10.53\% for single and double endpoint removals at the central feedback setting with independent errors. A replay of 1,024 recorded API responses on constructed rebalancing tasks gives a smaller held-out reduction of 3.30\% (paired bootstrap interval 2.07--4.57\%). Strongly aligned errors, inferior endpoints, and indivisibility limit diversification. The contribution is an auditable routing stress test and implementation analysis, not an estimate of real-market crash probabilities.
\end{abstract}
\noindent\textbf{Keywords:} systemic risk; financial AI; operational resilience; model routing; market impact; randomized rounding.

\section{Introduction}
A financial application can remain available while its economic dependencies change sharply. Suppose institutions use different model endpoints to interpret a common input. When some endpoints become unavailable, their requests move to backups. If those policies converge on the same endpoint, a previously dispersed population may act on a common error precisely during a disruption. An availability dashboard can report successful requests throughout this transition.

The general concern is established. The Financial Stability Board identifies third-party dependencies, market correlations, and model risk as potential channels through which AI can affect financial stability \citep{fsb2024}. Algorithmic monoculture can have collective consequences not predicted by individual accuracy \citep{kleinberg2021}. The question here is more operational: \emph{which surviving endpoint should receive each displaced financial decision, and what risk does the resulting executable assignment create?}

Two distinctions matter. First, the relevant load is financial exposure propagated through market impact, rather than request count. Two requests can consume similar inference resources but induce different price responses. Second, allocating half a decision to each of two models is not equivalent to selecting one model with probability one half. The latter introduces assignment variance. A portfolio-style relaxation can therefore understate the risk of the executed policy.

We call the framework \emph{Contingent Exposure Routing} (CER). It combines scenario-specific eligibility, a financial response matrix, and a second moment of endpoint errors. Our contributions are:
\begin{enumerate}
\item A sharp outage construction separating normal from contingent concentration, including the limiting effect of a common error component.
\item An exact finite-population correction for randomized routing, an effective-exposure granularity identity, and examples where the fractional gap persists as the population grows.
\item An executable procedure with explicit constraints, analytical and numerical checks, paired network experiments, and a separately delimited API replay.
\end{enumerate}
These concern the formulation and implementation of a financial routing problem. We do not claim new portfolio variance algebra, a new general rounding principle, or discovery of AI-related systemic risk.

\begin{figure}[t]
\centering\includegraphics[width=\linewidth]{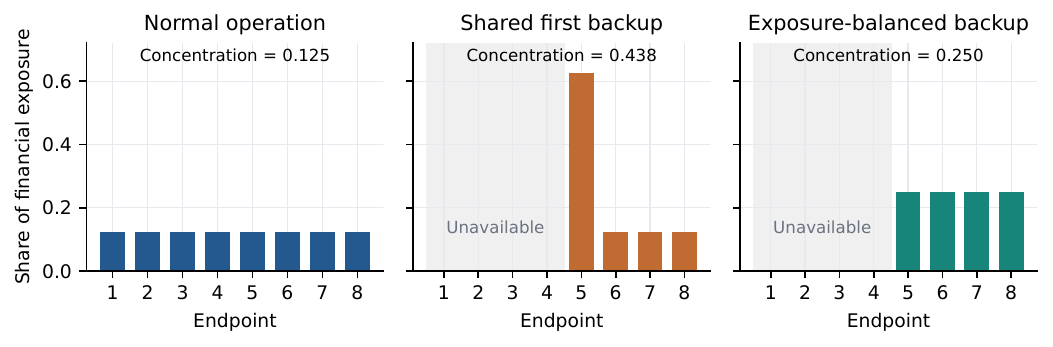}
\caption{Eight equally weighted primary endpoints, followed by removal of four. A common surviving backup concentrates exposure at endpoint 5; balanced redistribution attains the continuous minimum. Labels carry no quality ranking; errors are centered and independent with unit variance. The illustration concerns financial exposure, not inference throughput.}
\label{fig:mechanism}
\end{figure}

\section{Related work}
\paragraph{Financial propagation.}
Overlapping portfolios and deleveraging link institutions. \citet{greenwood2015} quantify vulnerability through balance sheets and common exposures. \citet{cont2017} develop systemic stress testing using liquidity-weighted overlaps, nonlinear deleveraging, and indirect exposures. Our response matrix is a local linear construction motivated by this literature; it does not reproduce their complete stress test or inherit their empirical calibration.

\paragraph{AI concentration and disruption.}
\citet{kleinberg2021} analyze collective consequences of a common algorithm in matching markets. The FSB discusses operational concentration and correlated financial behavior \citep{fsb2024}. Recent preprints examine AI-vendor compromise and financial contagion \citep{leytes2026}, and interactions between AI adoption, herding, and performative feedback \citep{meng2026}. We study the assignment intervention following endpoint removal. Cyber propagation, endogenous adoption, strategic equilibrium, and incident frequencies are outside our model.

\paragraph{Model routing.}
RouteLLM learns cost-quality routing from preference data \citep{ong2024}. RACER calibrates model sets to control a misrouting event \citep{hao2026}. Robust batch-level routing jointly manages assignment, uncertain performance, and resource constraints \citep{markovic2026}. These establish that routing can be learned, risk-aware, and jointly constrained. Our objective differs: errors assigned to different institutions interact through their financial exposures. Individual response quality and model-call counts do not determine it.

\paragraph{Discrete implementation.}
Randomized rounding and derandomization are classical \citep{raghavan1987}; dependent rounding can preserve cardinality constraints \citep{gandhi2006}. Convex relaxations and supporting-hyperplane bounds are standard \citep{boyd2004}. We specialize these ideas to the financial second moment, derive the exact correction for independent endpoint selection, and distinguish its guarantee from the separate quota heuristic. A scoped search through 23 September 2026 did not locate this complete formulation and evaluation. It does not establish that every component or equivalent formulation is absent from prior work.

\section{Financial exposure under contingent routing}
\subsection{Local price response}
There are \(n\) institutions, \(d\) assets, and \(K\) model endpoints. An endpoint can be a model version or an operationally distinct service; endpoint diversity does not imply independent vendors. Let \(E\in\R^{d\times n}\) map relative decision errors to first-round price displacement, and \(B\in\R^{d\times d}\) describe a local feedback response. For institution-level error \(u\),
\begin{equation}
 x=Eu+Bx,\qquad x=(I-B)^{-1}Eu,\qquad \rho(B)<1.
 \label{eq:response}
\end{equation}
For an asset-weighting matrix \(W\succeq0\), define
\begin{equation}
 A=W^{1/2}(I-B)^{-1}E=[a_1,\ldots,a_n].
 \label{eq:A}
\end{equation}
The outcome is \(\norm{W^{1/2}x}_2^2\), a quadratic price-displacement measure. Choosing \(W=ww^\top\) gives the squared displacement of a specified linear portfolio. Neither choice is automatically a default probability, realized monetary loss, or expected shortfall.

Equation~\eqref{eq:response} is a local response around a fixed operating state. Signed deviations may represent excess sales or purchases. Thresholds, insolvency, endogenous liquidity withdrawal, and large-shock saturation need a different propagation model. A participation factor \(\eta\) multiplying \(E\) multiplies all risks by \(\eta^2\); the experiments report normalized units with this factor divided out.

\subsection{Errors, outages, and executable assignments}
In outage state \(s\), the endpoint error vector \(\epsilon_s\in\R^K\) has finite \emph{uncentered} second moment
\begin{equation}
 S_s=\E[\epsilon_s\epsilon_s^\top\mid s]\succeq0.
 \label{eq:S}
\end{equation}
Systematic bias is retained; cross-endpoint dependence is permitted. Each endpoint error is shared by institutions assigned to that endpoint during the decision window. This is a common-input or broadcast-advice model, not a general law for separately prompted agents.

A deterministic matrix \(Z_s\in\{0,1\}^{n\times K}\) has one unit entry per row. Institution \(i\)'s error is \((Z_s\epsilon_s)_i\). An institution whose primary endpoint \(p_i\) survives remains there; an affected one uses an eligible survivor. The objective is
\begin{equation}
 F_s(Z_s)=\E[\norm{AZ_s\epsilon_s}_2^2\mid s]
       =\tr(AZ_sS_sZ_s^\top A^\top).
 \label{eq:F}
\end{equation}
This follows by expansion and requires no Gaussian assumption. Eligibility can encode operational or quality restrictions. Exact endpoint quotas add
\begin{equation}
 \sum_i Z_{s,ij}=q_{s,j}.
 \label{eq:quota}
\end{equation}
Equal quotas give identical endpoint counts across policies. They imply equal total inference cost only when cost per request is fixed within each endpoint.

A fractional \(Q_s\) satisfies the same linear constraints with entries in \([0,1]\). Its objective describes actual splitting or linear aggregation of decisions and is a convex relaxation of deterministic assignment. It does \emph{not} generally describe independently sampling one endpoint from each row.

Assignments must be chosen before evaluated errors are observed. Calibration data may determine a policy; held-out errors may not. Real outages can alter input and error distributions, so \(S_s\) need not equal the normal-state moment. Our endpoint-removal experiments hold the error distribution fixed to isolate reassignment.

\section{What primary diversification misses}
Consider scalar exposure of total mass one, equally spread over \(K\) primaries, with error second moment \(I_K\). Remove \(r\) endpoints, \(1\le r<K\), and retain surviving primary exposures.

\begin{theorem}[A contingent concentration floor]\label{thm:floor}
If all displaced mass uses a single surviving backup, post-removal risk is
\begin{equation}
 C_{\mathrm{common}}(K,r)=\frac{K+r(r+1)}{K^2}.
 \label{eq:common}
\end{equation}
If displaced exposure is divisible and every survivor is eligible, the minimum is
\begin{equation}
 C_{\mathrm{balanced}}(K,r)=\frac1{K-r}.
 \label{eq:balanced}
\end{equation}
For \(r/K\to\alpha\in(0,1)\), common-backup risk tends to \(\alpha^2\), balanced risk is asymptotic to \([K(1-\alpha)]^{-1}\), and their ratio is asymptotic to \(\alpha^2(1-\alpha)K\).
\end{theorem}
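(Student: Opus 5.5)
In this scalar setting the objective \eqref{eq:F} with \(S=I_K\) and a single row \(A\) reduces to a sum of squares. Write the post-removal exposure on surviving endpoint \(j\) as \(m_j\); then the risk is \(\sum_j m_j^2\). The plan is to express each policy through these masses, evaluate the sum of squares, and then take limits.

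\textbf{Common backup.} Before removal each endpoint carries mass \(1/K\). After removing \(r\) endpoints, the displaced mass \(r/K\) moves to a single surviving backup. That backup then carries \((1+r)/K\), and each of the other \(K-r-1\) survivors keeps \(1/K\). The risk is
\[
\frac{(1+r)^2+(K-r-1)}{K^2}=\frac{K+r^2+r}{K^2},
\]
which is \eqref{eq:common}.

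\textbf{Balanced minimum.} The survivors' masses satisfy \(m_j\ge 1/K\), since surviving primary exposure is retained, and \(\sum_j m_j=1\). By Cauchy--Schwarz (equivalently, convexity of \(\sum_j m_j^2\) on the simplex), \(\sum_j m_j^2\ge 1/(K-r)\). Equality holds exactly when every \(m_j=1/(K-r)\). This point is feasible: it exceeds \(1/K\), so it is reached by sending displaced mass \(r/(K(K-r))\) to each survivor, and every survivor is eligible and exposure is divisible. Hence the minimum is \eqref{eq:balanced}. The one point to check is this feasibility of the unconstrained minimizer under the lower bounds \(m_j\ge 1/K\). It is the main (mild) obstacle: it is what lets the lower-bound constraints be ignored.

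\textbf{Asymptotics.} With \(r/K\to\alpha\),
\[
C_{\mathrm{common}}=\frac{1}{K}+\frac{r^2}{K^2}+\frac{r}{K^2}\to\alpha^2 .
\]
Also \(K(1-r/K)\,C_{\mathrm{balanced}}=1\), and since \(1-r/K\to 1-\alpha\), we get \(C_{\mathrm{balanced}}\sim [K(1-\alpha)]^{-1}\). Combining the two, the ratio is
\[
\frac{C_{\mathrm{common}}}{C_{\mathrm{balanced}}}=C_{\mathrm{common}}\,(K-r)=\bigl(\alpha^2+o(1)\bigr)\,K\bigl(1-\alpha+o(1)\bigr)\sim\alpha^2(1-\alpha)K .
\]
This uses \(\alpha^2>0\) and \(1-\alpha>0\), which hold because \(\alpha\in(0,1)\).
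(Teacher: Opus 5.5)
Your proposal is correct and follows the paper's proof essentially step for step. You sum the squared survivor loads for the common backup, apply Cauchy--Schwarz to survivor loads summing to one, get attainment from \(1/(K-r)\ge 1/K\) so that only displaced mass moves, and take the limits directly, implicitly with \(K\to\infty\) as the statement intends.
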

\begin{proof}
The common backup carries \((r+1)/K\); the other \(K-r-1\) survivors carry \(1/K\) each. Sum their squared loads. All survivor loads sum to one, so Cauchy--Schwarz gives the lower bound \(1/(K-r)\). Equal loads attain it because \(1/(K-r)\ge1/K\), requiring only displaced mass to move. The limits follow directly.
\end{proof}
The construction compares identical normal assignments and identical post-removal availability. It is a worst-case family, not a probability model for provider failures. With fixed \(r\) and \(K\to\infty\), the risk ratio tends to one; growing amplification requires outage size to grow with \(K\).

For unequal nonnegative survivor loads \(b_j\) and movable mass \(m\), minimizing \(\sum_j(b_j+y_j)^2\) over \(y_j\ge0\), \(\sum_jy_j=m\), gives the water-filling solution
\begin{equation}
 b_j+y_j=\max(b_j,\tau),\qquad \sum_j(\tau-b_j)_+=m.
 \label{eq:waterfill}
\end{equation}
The KKT conditions equate the marginal squared-load cost on endpoints receiving positive additions and leave higher initial loads unchanged. This is a continuous benchmark, not an integral algorithm.

\begin{proposition}[An irreducible common error]\label{prop:commonerror}
If \(S=(1-\gamma)I+\gamma\one\one^\top\), \(0\le\gamma\le1\), and \(Q\one=\one\), then
\begin{equation}
 F(Q)=(1-\gamma)\norm{AQ}_F^2+\gamma\norm{A\one}_2^2.
 \label{eq:equicorr}
\end{equation}
Routing changes only the first term. At \(\gamma=1\), every feasible routing has identical risk.
\end{proposition}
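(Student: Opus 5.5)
The plan is to substitute the equicorrelated second moment directly into the trace form of the objective in \eqref{eq:F}, with the fractional \(Q\) in place of \(Z_s\), and split by linearity. Writing \(S=(1-\gamma)I+\gamma\one\one^\top\), we have
\[
F(Q)=\tr(AQSQ^\top A^\top)=(1-\gamma)\tr(AQQ^\top A^\top)+\gamma\,\tr(AQ\one\one^\top Q^\top A^\top).
\]
The first trace equals \(\norm{AQ}_F^2\) by the definition of the Frobenius norm. For the second term, use the cyclic property, or simply note that \(\tr(vv^\top)=\norm{v}_2^2\) with \(v=AQ\one\). Then invoke the row-stochastic constraint \(Q\one=\one\), which every feasible deterministic or fractional assignment satisfies because each row sums to one. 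This gives \(v=A\one\), so the second term is \(\gamma\norm{A\one}_2^2\), which is \eqref{eq:equicorr}.

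The remaining claims follow from the displayed formula. The vector \(A\one=\sum_i a_i\) depends only on the exposure matrix and not on the routing. Hence, over the feasible set defined by eligibility and any quotas, the routing enters only through \((1-\gamma)\norm{AQ}_F^2\). At \(\gamma=1\) the first coefficient vanishes, so \(F(Q)=\norm{A\one}_2^2\) for every feasible \(Q\). This also covers deterministic \(Z\), since those are special cases with \(Z\one=\one\).

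No step is a genuine obstacle. The only point needing care is that \(S\) is a valid uncentered second moment: it is positive semidefinite for \(0\le\gamma\le1\), with eigenvalues \(1-\gamma\) and \(1-\gamma+\gamma K\). It is also worth stating explicitly that \(Q\one=\one\) is the only constraint used, so the identity holds regardless of eligibility patterns or the quotas in \eqref{eq:quota}.
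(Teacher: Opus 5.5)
Your proof is correct and is essentially the paper's own argument: substitute the equicorrelated \(S\) into the trace form \eqref{eq:F}, split by linearity, and use \(Q\one=\one\) to reduce the second term to \(\gamma\norm{A\one}_2^2\). Your side remarks are accurate: \(S\succeq0\) for \(0\le\gamma\le1\), and \(Q\one=\one\) is the only constraint used.
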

\begin{proof}
Substitute \(S\) into \eqref{eq:F} and use \(Q\one=\one\).
\end{proof}
For scalar unit mass, the second term is \(\gamma\). A nonzero common error therefore limits relative diversification gains even in the growing-outage construction.

\begin{figure}[t]
\centering\includegraphics[width=\linewidth]{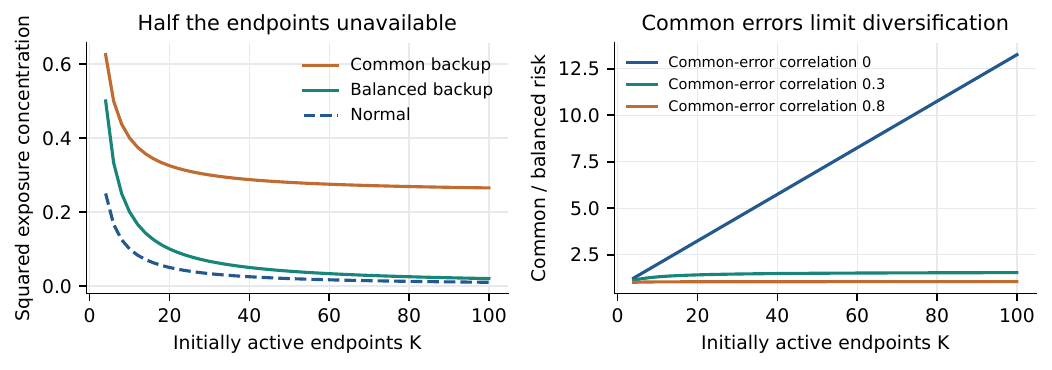}
\caption{Exact consequences of Theorem~\ref{thm:floor} with \(r=K/2\). Left: a shared backup leaves a nonzero concentration limit. Right: common error limits the relative improvement. These are analytical constructions, not estimates of outage likelihood.}
\end{figure}

\section{Indivisible decisions: the missing second moment}
Suppress the outage subscript. Independently for each institution, sample \(J_i\) from row \(q_i\) of \(Q\), independently of \(\epsilon\), and set \(Z_{ij}=\mathbf1\{J_i=j\}\).

\begin{theorem}[Exact random-routing premium]\label{thm:premium}
For every \(A\), row-stochastic \(Q\), and \(S\succeq0\),
\begin{align}
 \E_ZF(Z)&=F(Q)+D(Q),\label{eq:premium}\\
 D(Q)&=\sum_i\norm{a_i}_2^2
 \left(\diag(S)^\top q_i-q_i^\top S q_i\right)\ge0.
 \label{eq:D}
\end{align}
Here \(q_i\) is the column representation of row \(i\). Furthermore,
\begin{equation}
 D(Q)\le\lambda_{\max}(S)
       \sum_i\norm{a_i}_2^2(1-\norm{q_i}_2^2).
 \label{eq:boundD}
\end{equation}
\end{theorem}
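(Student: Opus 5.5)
The plan is to expand $F(Z)=\tr(AZSZ^\top A^\top)$ as a double sum over institutions and then take the expectation over $Z$ term by term. Writing $z_i$ for the column representation of row $i$ of $Z$, we have $AZ=\sum_i a_i z_i^\top$. Hence
\begin{equation*}
F(Z)=\sum_{i,k} a_i^\top a_k\, z_i^\top S z_k .
\end{equation*}
The same identity holds with $Q$ in place of $Z$ and $q_i$ in place of $z_i$. The expectation of this quadratic form is taken over $Z$ only. Because $Z$ is independent of $\epsilon$, conditioning on $Z$ is legitimate and $F(Z)$ is exactly the conditional risk.

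For the off-diagonal terms $i\neq k$, the rows $z_i$ and $z_k$ are independent with $\E z_i=q_i$. This gives $\E[z_i^\top S z_k]=q_i^\top S q_k$, so these terms reproduce the corresponding terms of $F(Q)$. For the diagonal terms, $z_i=e_{J_i}$ is a standard basis vector, so $z_i^\top S z_i=S_{J_iJ_i}$ and $\E[z_i^\top S z_i]=\diag(S)^\top q_i$. Subtracting the diagonal terms $\norm{a_i}_2^2\, q_i^\top S q_i$ of $F(Q)$ yields exactly \eqref{eq:premium} with $D(Q)$ as in \eqref{eq:D}.

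For nonnegativity, I would write each summand as a covariance:
\begin{equation*}
\diag(S)^\top q_i-q_i^\top S q_i=\E[(z_i-q_i)^\top S(z_i-q_i)]=\tr\bigl(S\,\mathrm{Cov}(z_i)\bigr)\ge 0,
\end{equation*}
since both $S$ and $\mathrm{Cov}(z_i)$ are PSD. This same representation gives \eqref{eq:boundD}. The bound $\tr(S\,\mathrm{Cov}(z_i))\le\lambda_{\max}(S)\tr\mathrm{Cov}(z_i)$ holds for PSD matrices. Moreover $\tr\mathrm{Cov}(z_i)=\E\norm{z_i}_2^2-\norm{q_i}_2^2=1-\norm{q_i}_2^2$, because $\norm{z_i}_2=1$.

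No step is really an obstacle. The point needing the most care is the diagonal identity: the computation $\E[z_iz_i^\top]=\diag(q_i)$ rests on $z_i$ being a $0/1$ indicator with a single one, since $z_i z_i^\top=\diag(z_i)$. Separately, the cross terms use independence across institutions, which is what keeps the correction restricted to the diagonal.
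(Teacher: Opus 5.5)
Your proposal is correct and follows the paper's proof essentially step for step. Both expand \(F(Z)=\sum_{i,\ell}(a_i^\top a_\ell)S_{J_iJ_\ell}\), use independence across institutions to match the off-diagonal terms with \(F(Q)\), isolate the diagonal discrepancy \(\diag(S)^\top q_i-q_i^\top Sq_i\), and bound it as \(\tr(SV_i)\) with the categorical covariance \(V_i=\diag(q_i)-q_iq_i^\top\), using \(\tr(V_i)=1-\norm{q_i}_2^2\).
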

\begin{proof}
Expand \(F(Z)=\sum_{i,\ell}(a_i^\top a_\ell)S_{J_iJ_\ell}\).
For \(i\ne\ell\), independence gives
\(\E S_{J_iJ_\ell}=q_i^\top S q_\ell\).
For \(i=\ell\), the expectation is \(\diag(S)^\top q_i\), whereas its counterpart in \(F(Q)\) is \(q_i^\top S q_i\). Subtraction proves the identity. The categorical covariance \(V_i=\diag(q_i)-q_iq_i^\top\) is positive semidefinite. Thus
\(0\le\tr(SV_i)\le\lambda_{\max}(S)\tr(V_i)
=\lambda_{\max}(S)(1-\norm{q_i}_2^2)\).
\end{proof}
The identity allows nonzero mean errors. It does not apply unchanged to shared random seeds, coordinated routing, or routing that observes contemporaneous errors: these change the joint assignment law.

\begin{corollary}[Effective-exposure granularity]\label{cor:neff}
Suppose \(S=I_K\), \(q_i=K^{-1}\one\), and \(A\one\ne0\). Define
\begin{equation}
 n_{\mathrm{eff}}(A)=\frac{\norm{A\one}_2^2}{\sum_i\norm{a_i}_2^2}.
 \label{eq:neff}
\end{equation}
Then
\begin{equation}
 \frac{\E_ZF(Z)}{F(Q)}=1+\frac{K-1}{n_{\mathrm{eff}}(A)}.
 \label{eq:relativepremium}
\end{equation}
If pairwise exposure inner products are nonnegative, \(1\le n_{\mathrm{eff}}(A)\le n\).
\end{corollary}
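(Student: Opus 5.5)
The plan is to specialize Theorem~\ref{thm:premium} to \(S=I_K\) and uniform rows, and then compute \(F(Q)\) directly.

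First, \(F(Q)\). By \eqref{eq:F} with \(S=I_K\), we have \(F(Q)=\norm{AQ}_F^2\). Since every row of \(Q\) equals \(K^{-1}\one^\top\), we can write \(Q=K^{-1}\one_n\one_K^\top\). Hence \(AQ=K^{-1}(A\one)\one_K^\top\), which gives
\[
F(Q)=K^{-2}\norm{A\one}_2^2\,\norm{\one_K}_2^2=\norm{A\one}_2^2/K.
\]
This is positive by the hypothesis \(A\one\ne0\), so the ratio in \eqref{eq:relativepremium} is well defined.

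Second, \(D(Q)\). Use \eqref{eq:D} with \(\diag(S)=\one\). Then \(\diag(S)^\top q_i=1\) and \(q_i^\top q_i=1/K\), so each bracket equals \(1-1/K\). Therefore
\[
D(Q)=\frac{K-1}{K}\sum_i\norm{a_i}_2^2.
\]
Dividing \(F(Q)+D(Q)\) by \(F(Q)\) gives \(1+(K-1)\sum_i\norm{a_i}_2^2/\norm{A\one}_2^2\). By \eqref{eq:neff} this is exactly \(1+(K-1)/n_{\mathrm{eff}}(A)\). In passing, \(A\one\ne0\) forces some \(a_i\ne0\), so the denominator of \(n_{\mathrm{eff}}(A)\) is also positive.

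Third, the bounds on \(n_{\mathrm{eff}}(A)\). Expand \(\norm{A\one}_2^2=\sum_{i,\ell}a_i^\top a_\ell\).
\begin{itemize}
\item Lower bound: the off-diagonal terms are nonnegative by hypothesis, so \(\norm{A\one}_2^2\ge\sum_i\norm{a_i}_2^2\). This gives \(n_{\mathrm{eff}}(A)\ge1\).
\item Upper bound: apply Cauchy--Schwarz (or convexity) to \(\norm{\sum_i a_i}_2\le\sum_i\norm{a_i}_2\). This gives \(\norm{A\one}_2^2\le n\sum_i\norm{a_i}_2^2\), so \(n_{\mathrm{eff}}(A)\le n\). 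This step does not need the sign hypothesis.
\end{itemize}

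No step is a real obstacle. The only places needing care are the positivity of the denominators, handled above, and correctly identifying \(F(Q)\) as \(\norm{A\one}_2^2/K\) rather than \(\norm{A\one}_2^2\).
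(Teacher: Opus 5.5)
Your proposal is correct and matches the paper's proof. You specialize Theorem~\ref{thm:premium} to obtain \(F(Q)=\norm{A\one}_2^2/K\) and \(D(Q)=(1-1/K)\sum_i\norm{a_i}_2^2\), then get \(n_{\mathrm{eff}}(A)\ge1\) from the nonnegative cross terms and \(n_{\mathrm{eff}}(A)\le n\) from Cauchy--Schwarz; your extra checks (positivity of both denominators, and that the upper bound needs no sign hypothesis) are correct and slightly more careful than the paper's.
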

\begin{proof}
Here \(F(Q)=\norm{A\one}_2^2/K\) and
\(D(Q)=(1-1/K)\sum_i\norm{a_i}_2^2\).
Nonnegative pairwise inner products give the lower bound on \(n_{\mathrm{eff}}\); Cauchy--Schwarz gives the upper bound.
\end{proof}
For aligned \(a_i=w_iv\), with nonnegative weights summing to one, \(n_{\mathrm{eff}}=1/\sum_iw_i^2\). Large institutions can preserve a material premium despite a large headcount. If the columns of \(A\) are orthogonal, \(n_{\mathrm{eff}}=1\). With \(A=I_n\), every integral assignment has risk \(n\), while uniform fractional routing has risk \(n/K\). The integral-to-fractional ratio remains \(K\) for every \(n\).

\section{Routing procedures and certificates}
\subsection{Relaxation and finite-agent assignment}
Let \(\mathcal Q_s\) be the eligible row-stochastic polytope, optionally including quotas. Since
\begin{equation}
 F_s(Q)=\norm{AQS_s^{1/2}}_F^2,
 \label{eq:convex}
\end{equation}
minimization over \(\mathcal Q_s\) is a convex quadratic program.

\begin{proposition}[Conditional-expectation assignment]\label{prop:round}
For any feasible fractional \(Q\) with row-wise eligibility and no coupled quota constraints, an eligible deterministic \(Z\) can be constructed with
\begin{equation}
 F(Z)\le F(Q)+D(Q).
 \label{eq:round}
\end{equation}
Fix each remaining row to an eligible endpoint minimizing the exact conditional expectation of final risk.
\end{proposition}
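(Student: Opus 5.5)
The plan is the method of conditional expectations applied to the independent randomized routing of Theorem~\ref{thm:premium}. Let each row \(J_i\) be sampled independently from \(q_i\). Theorem~\ref{thm:premium} gives \(\E_Z F(Z)=F(Q)+D(Q)\). This expectation is the starting potential \(\Phi_0\). Process institutions in a fixed order \(1,\ldots,n\). After rows \(1,\ldots,t\) are fixed to endpoints \(j_1,\ldots,j_t\), define
\[
\Phi_t=\E\left[F(Z)\mid J_1=j_1,\ldots,J_t=j_t\right],
\]
where rows \(t+1,\ldots,n\) remain independently distributed according to their rows of \(Q\).

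The first step is to write \(\Phi_t\) in closed form, so that the selection rule in the statement is computable. Expand \(F(Z)=\sum_{i,\ell}(a_i^\top a_\ell)S_{J_iJ_\ell}\) and take the expectation row by row. Each row \(i\) then has an effective vector \(r_i\): \(r_i=e_{j_i}\) for fixed rows and \(r_i=q_i\) for open rows. Off-diagonal terms have expectation \(r_i^\top S r_\ell\) by independence. Diagonal terms have expectation \(\diag(S)^\top r_i\), which for a fixed row equals \(S_{j_ij_i}=e_{j_i}^\top S e_{j_i}\). Hence
\[
\Phi_t=F(R_t)+\sum_{i>t}\norm{a_i}_2^2\left(\diag(S)^\top q_i-q_i^\top S q_i\right),
\]
where \(R_t\) is the matrix with rows \(r_i\). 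This is the same algebra as in the proof of Theorem~\ref{thm:premium}, applied to a partially fixed \(Q\).

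The second step is the averaging argument. With rows \(1,\ldots,t\) fixed,
\[
\Phi_t=\sum_j q_{t+1,j}\,\Phi_{t+1}(j),
\]
where \(\Phi_{t+1}(j)\) is the potential obtained by setting \(J_{t+1}=j\). This holds by the tower property, because \(J_{t+1}\) is independent of the other rows. The weights \(q_{t+1,j}\) are nonnegative, sum to one, and vanish on ineligible endpoints. Therefore some eligible \(j\) in the support of \(q_{t+1}\) satisfies \(\Phi_{t+1}(j)\le\Phi_t\). The rule in the statement chooses the minimizer over all eligible endpoints, which does at least as well. By induction, \(\Phi_n\le\Phi_0=F(Q)+D(Q)\). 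When all rows are fixed, the expectation is over \(Z\) only, with \(\epsilon\) already integrated out in \eqref{eq:F}, so \(\Phi_n=F(Z)\) for the constructed deterministic \(Z\). This gives \eqref{eq:round}.

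The main obstacle is that the argument must be stated carefully rather than computed at length. The averaging step uses the product structure of the sampling law. That is exactly why coupled quota constraints are excluded: under a quota, fixing one row changes the law of the others, and the conditional expectations no longer decompose as above. I would also record two points. Eligibility is preserved because each row is only ever set to an eligible endpoint. Fixed primaries with \(q_i=e_{p_i}\) are handled automatically, since their support is a single endpoint.
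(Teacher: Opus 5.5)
Your proof is correct and is the same conditional-expectation (derandomization) argument the paper gives: start from $\E_Z F(Z)=F(Q)+D(Q)$, observe that the current conditional expectation is the $q_i$-weighted average over the next row's possible endpoints, and fix that row to a minimizer. Your explicit formula $\Phi_t=F(R_t)+\sum_{i>t}\norm{a_i}_2^2(\diag(S)^\top q_i-q_i^\top S q_i)$ is a useful addition that the paper leaves implicit, because it shows the selection rule can be computed in closed form.
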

\begin{proof}
Given previously fixed rows, the current expectation is the \(q_i\)-weighted average of expectations obtained by fixing the next row to each endpoint in its support. At least one is no greater than that average. Choosing a minimum cannot increase the expectation; repeat until every row is fixed.
\end{proof}
This specializes classical derandomization \citep{raghavan1987}. It does not preserve hard endpoint quotas, even if its input came from a quota-constrained relaxation.

\subsection{Quota-preserving exposure swaps}
For hard quotas, CER starts from count-balanced routing and swaps endpoints of two affected institutions. Unaffected institutions never move. Write \(M=AZ\). If \(i\) uses endpoint \(p\), \(\ell\) uses \(q\), and \(h=a_i-a_\ell\), the swap changes risk by
\begin{equation}
 \Delta F =
 2h^\top[(MS)_{:q}-(MS)_{:p}]
 +\norm{h}_2^2(S_{pp}+S_{qq}-2S_{pq}).
 \label{eq:swap}
\end{equation}
This follows by substituting \(M+h(e_q-e_p)^\top\) into the quadratic form. We take the best strictly improving eligible swap. Counts remain fixed. A scan of \(m\) affected institutions costs \(O(dm^2+dK^2)\) arithmetic with cached \(M\). Strict descent on a finite set ensures termination in exact arithmetic, but gives neither a polynomial iteration bound nor global optimality.

Even without quotas, scalar two-endpoint routing contains PARTITION. For positive integer exposures \(a_i\), \(S=I_2\), and total \(T\), risk is \(L^2+(T-L)^2\), where \(L\) is assigned to endpoint 1. It equals \(T^2/2\) exactly when the exposures admit an equal partition. This elementary reduction motivates reporting bounds rather than presenting local search as exact.

\subsection{A lower bound with the correct direction}
A feasible fractional objective is an upper bound on the fractional minimum; it cannot alone certify an integral gap. At any reference \(Q_0\), put \(G=2A^\top A Q_0S\). Convexity gives
\(F(Q)\ge F(Q_0)+\langle G,Q-Q_0\rangle\).
For equality quotas \(q_j\), choose any \(v_j\), and define
\begin{equation}
 \begin{split}
 u_i&=\min_{j\in\mathcal E_i}(G_{ij}-v_j),\\
 L&=F(Q_0)-\langle G,Q_0\rangle+\sum_i u_i+\sum_jq_jv_j .
 \end{split}\label{eq:dual}
\end{equation}
Then \(F(Q)\ge L\) for all feasible \(Q\), because \(u_i+v_j\le G_{ij}\) on eligible edges. We obtain \(v\) from the solver and recompute \(u_i\) to enforce this inequality. Without quotas, set \(v=0\). For \(L>0\), the reported gap is \((F(Z)-L)/L\). These are floating-point numerical bounds checked against small exhaustive problems, not interval-arithmetic certificates.

\subsection{Moment uncertainty and unequal endpoint quality}
For \(\widehat S\succeq0\) and
\(\mathcal U_\delta=\{S\succeq0:\norm{S-\widehat S}_2\le\delta\}\),
\begin{equation}
 \sup_{S\in\mathcal U_\delta}F(Q;S)
 =F(Q;\widehat S)+\delta\norm{AQ}_F^2
 =F(Q;\widehat S+\delta I).
 \label{eq:robust}
\end{equation}
Indeed \(C=Q^\top A^\top AQ\succeq0\) and
\(\tr[C(S-\widehat S)]\le\delta\tr(C)\), attained at
\(S=\widehat S+\delta I\). The envelope also applies before averaging randomized assignments. Choosing \(\delta\) needs justification; a selected ridge coefficient is not automatically a statistical confidence set.

Diversity is not always preferable. For two scalar endpoints with second moments \(s_1,s_2\) and cross moment \(c\), exposure weight \(w\) on endpoint 1 gives
\(f(w)=s_1w^2+s_2(1-w)^2+2cw(1-w)\).
If \(s_1+s_2-2c>0\), the optimum is
\begin{equation}
 w^*=\left[\frac{s_2-c}{s_1+s_2-2c}\right]_{[0,1]}.
 \label{eq:quality}
\end{equation}
In particular, \(c\ge s_1\) makes concentration at endpoint 1 optimal. CER optimizes financial error, not diversity for its own sake.

\section{Computational design}
\subsection{Synthetic portfolio networks}
The locally frozen protocol specifies 60 independently seeded networks with 48 institutions, six assets, and six endpoints. It was saved before collection, but was not externally preregistered. Institution sizes are proportional to independent \(\operatorname{Lognormal}(0,1.1^2)\) draws. Portfolio columns follow
\(H_{:i}\sim\operatorname{Dirichlet}(0.35\one_6)\).
Impact coefficients are proportional to \(\operatorname{Lognormal}(0,0.5^2)\) draws, normalized to mean one. For size weights \(w\) summing to one,
\begin{equation}
 E=\diag(\ell)H\diag(w),\quad
 T=EH^\top,\quad B=\beta T/\rho(T),\quad W=I.
 \label{eq:network}
\end{equation}
The feedback radii are \(\beta\in\{0,0.4,0.75\}\). These distributions create controlled heterogeneity; they are not fitted to balance sheets or market depth.

Each endpoint has eight randomly assigned primary institutions. All six single removals and all 15 double removals are enumerated. Error moments are \(S=(1-\gamma)I+\gamma\one\one^\top\), with \(\gamma\in\{0,0.3,0.8\}\).
The \(60\times3\times21\times3=11{,}340\) evaluations are dependent scenarios on 60 independent networks, not 11,340 independent observations.

Count-balanced routing processes affected institutions in a seeded random order, assigning each to a survivor with the smallest current count. CER starts from the same assignment and preserves its quotas. A common-backup comparator sends displaced institutions to the lowest-index survivor; equal second moments make this index arbitrary. Risks are evaluated by the exact second-moment formula, not estimated from simulated shocks.

\paragraph{Inference and controls.}
Within each network, we average \(1-F_{\rm CER}/F_{\rm count}\) over removal subsets of a given size. Percentile intervals resample the 60 network means 2,000 times. They summarize variability under the network generator, not uncertainty about the actual financial system. Lower bounds are evaluated at \(\beta=0.4\). Homogeneous exposures and the \(\gamma=1\) identity provide zero-benefit controls.

The implementation has 48 automated checks: assignment enumeration, lower-bound comparisons, iterative cascades, exact accounting identities, quota and eligibility preservation, and negative controls. A separate study uses \(n\in\{4,8,16,32,64\}\), four endpoints, and 20 seeds per size to compare the fractional lower bound, exact independent-routing risk, and realized conditional-expectation assignment.

\subsection{Recorded API errors and held-out replay}
We also collect errors rather than prescribe \(S\). The task is one-asset rebalancing with proportional fees charged against post-trade net asset value (NAV). Given initial NAV \(V\), risky position \(A_0\), target fraction \(w\), and fee rate \(c\), signed trade \(u\) satisfies
\begin{equation}
 A_0+u=w(V-c|u|),\qquad
 u=\frac{wV-A_0}{1+wc\,\operatorname{sgn}(wV-A_0)}.
 \label{eq:rebalance}
\end{equation}
The generator uses four wordings, positive and negative gaps, and targets that can exceed one; borrowing and shorting are expressly allowed. Denominators are positive throughout the declared range. A 40-digit decimal calculation supplies ground truth.

There are 256 tasks and four pinned endpoints:
\begin{itemize}
\item \texttt{gpt-4.1-nano-2025-04-14};
\item 
\texttt{gpt-4.1-mini-2025-04-14};
\item 
\texttt{gpt-4o-mini-2024-07-18}; and
\item 
\texttt{gpt-4.1-2025-04-14}.
\end{itemize}
Every task-endpoint pair is requested once at temperature zero, with JSON-object output and at most 180 completion tokens. A transport failure permits one retry. Invalid or non-stop outputs execute a zero trade under the declared fallback. Raw text, returned model identifiers, usage, timestamps, and response identifiers are retained.

A seeded split fixes 128 calibration and 128 test tasks before collection. Define \(e_{tj}=(\widehat u_{tj}-u_t)/V_t\) and fit only on calibration tasks:
\[
 \widehat S=128^{-1}\sum_{t\in{\rm cal}}e_te_t^\top.
\]
The same 60 synthetic exposure networks are instantiated with four endpoints. All single and double endpoint omissions are evaluated. A fixed assignment for each network and omission is applied to every held-out error vector. This is a \emph{shared-advice replay}: recorded errors are reused across assigned institutions. It does not measure the dependence of separately prompted agents or real vendor outages.

The primary comparator preserves endpoint counts. Secondary diagnostics replace \(\widehat S\) by its diagonal, the identity, or \(\widehat S+0.1\,\tr(\widehat S)I/K\). These were specified after collection and are exploratory. A common-best comparator sends displaced institutions to the survivor with smallest calibration diagonal moment; its endpoint counts differ.

Held-out task losses are averaged over the fixed network/scenario collection. We then resample 128 tasks with model responses paired. The reduction is the ratio-of-means contrast \(1-\overline L_{\rm CER}/\overline L_{\rm count}\), unlike the synthetic mean-of-ratios statistic. The 2,000 bootstrap resamples keep fitted policies fixed and condition on the calibration split.

\section{Results}
\subsection{Exposure-aware assignment improves the matched-count comparison}
At \(\beta=0.4,\gamma=0\), mean reductions are 6.57\% for single removals and 10.53\% for double removals (Table~\ref{tab:synthetic}). The intervals exclude zero. This is consistent with in-sample monotonicity: the intervals describe effect size across networks, not a test that a descending optimizer descends. Every endpoint quota is preserved.

Mean gaps to numerical lower bounds are 2.49\% and 2.73\%. These quantify room for improvement relative to the relaxation, not exact integral optimality. The largest primal constraint residual is \(1.31\times10^{-8}\). Median swap runtime is approximately 0.14 milliseconds on an Apple M4 Pro, a small-instance implementation measurement rather than a production throughput benchmark.

\begin{table}[t]
\centering\small
\caption{Synthetic results at \(\beta=0.4\). Reductions and gaps are percentages. Intervals resample 60 network-level mean reductions; all removal subsets of each size are included.}
\label{tab:synthetic}
\begin{tabular}{ccrrr}
\toprule
Error correlation & Removed & Mean reduction & 95\% interval & Mean gap to bound\\
\midrule
0.0 & 1 & 6.57 & [5.99, 7.17] & 2.49 \\
0.0 & 2 & 10.53 & [9.93, 11.14] & 2.73 \\
0.3 & 1 & 2.58 & [2.33, 2.84] & 0.84 \\
0.3 & 2 & 4.54 & [4.23, 4.84] & 1.03 \\
0.8 & 1 & 0.43 & [0.38, 0.48] & 0.13 \\
0.8 & 2 & 0.79 & [0.74, 0.85] & 0.17 \\
 
\bottomrule
\end{tabular}
\end{table}

\begin{figure}[t]
\centering\includegraphics[width=\linewidth]{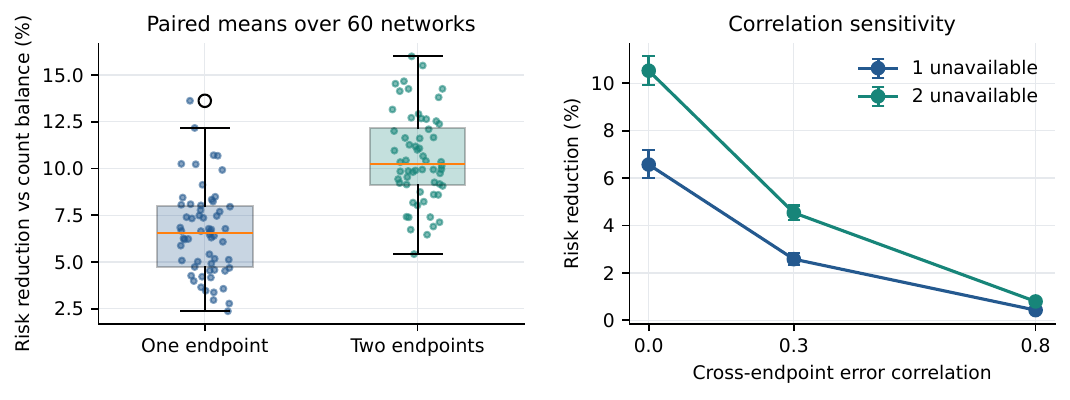}
\caption{Matched-count comparison. Left: one point per network, averaged over removals of the given size; \(\beta=0.4,\gamma=0\). Right: means and network bootstrap intervals as common error increases. The common component is invariant to routing.}
\end{figure}

At correlation 0.8 the benefits fall to 0.43\% and 0.79\%. Homogeneous exposures give exactly equal risks for matched quotas. These negative controls show that the method cannot remove common error or improve an already identical financial allocation. Feedback variation preserves positive average reductions but changes their magnitude (Appendix~\ref{app:details}).

\subsection{Fractional policies can overstate diversification}
Across 20 seeds, independent routing exceeds the fractional lower bound by an average 176.0\% at \(n=4\) and 28.0\% at \(n=64\). Conditional-expectation assignments reduce the gaps to 134.7\% and 5.0\%. Every realized assignment satisfies its finite-agent bound. Figure~\ref{fig:round} also shows the exact orthogonal-exposure counterexample, where increasing \(n\) does not close the gap.

\begin{figure}[t]
\centering\includegraphics[width=\linewidth]{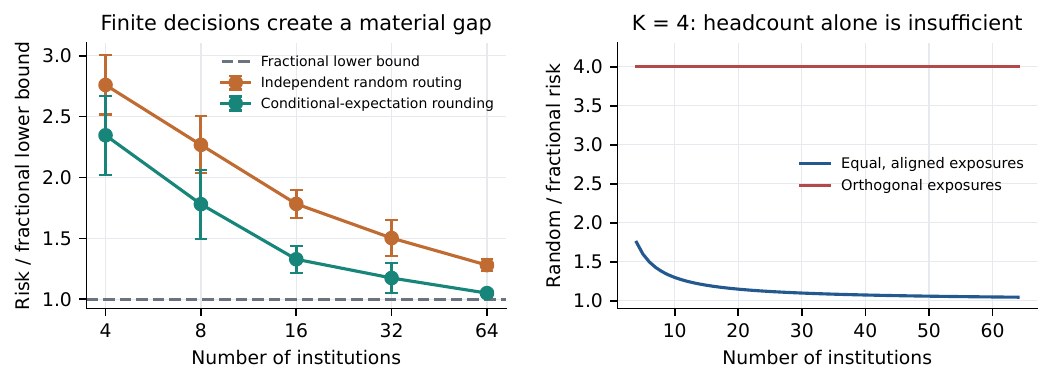}
\caption{Four-endpoint indivisible-routing effects. Left: 20-seed means with descriptive 1.96-standard-error bars, normalized by the fractional lower bound. The dashed line is a relaxation, not an executable random policy. Right: exact uniform-routing cases; orthogonal exposures retain a factor-four gap at every population size.}
\label{fig:round}
\end{figure}

\subsection{Recorded-error replay: smaller, conditional benefits}
Collection yielded 1,024 observations and one transport retry, for 1,025 primary HTTP requests. Fifty-nine responses reached the completion limit and used zero-trade fallback. Estimated primary token cost was \$0.2143 at nominal rates; this is not an invoiced charge.

Held-out replay gives a 3.30\% reduction against matched-count routing, with paired interval 2.07--4.57\% (Table~\ref{tab:api}). Single- and double-omission estimates are 2.98\% and 3.52\%. Calibration error alignment \(S_{jk}/\sqrt{S_{jj}S_{kk}}\) ranges from 0.848 to 0.999 off diagonal. This is an uncentered cosine, not a Pearson correlation. Strong alignment is consistent with smaller gains than in the independent-error construction.

\begin{table}[t]
\centering\small
\caption{Held-out API replay. The first four policies preserve baseline quotas. Common best changes counts and uses the survivor with smallest calibration second moment. Intervals condition on fitted policies; exploratory comparisons have no multiplicity adjustment.}
\label{tab:api}
\begin{tabular}{lrr}
\toprule
Assignment objective & Reduction (\%) & 95\% interval\\
\midrule
CER (full second moment) & 3.30 & [2.07, 4.57] \\
Diagonal second moment & 1.20 & [0.74, 1.67] \\
Equal-variance exposure balance & 0.13 & [-0.04, 0.38] \\
Ridge-envelope diagnostic & 2.85 & [1.77, 3.97] \\
Common best (different counts) & 4.53 & [2.75, 6.32] \\
 
\bottomrule
\end{tabular}
\end{table}

Common best achieves a larger 4.53\% reduction with different counts. This does not contradict CER's constrained objective; it shows the importance of quality and resource constraints. Equal-variance exposure balancing gives only 0.13\% overall improvement, with an interval crossing zero. Ignoring the estimated error structure loses most of the benefit in this replay.

\begin{figure}[t]
\centering\includegraphics[width=\linewidth]{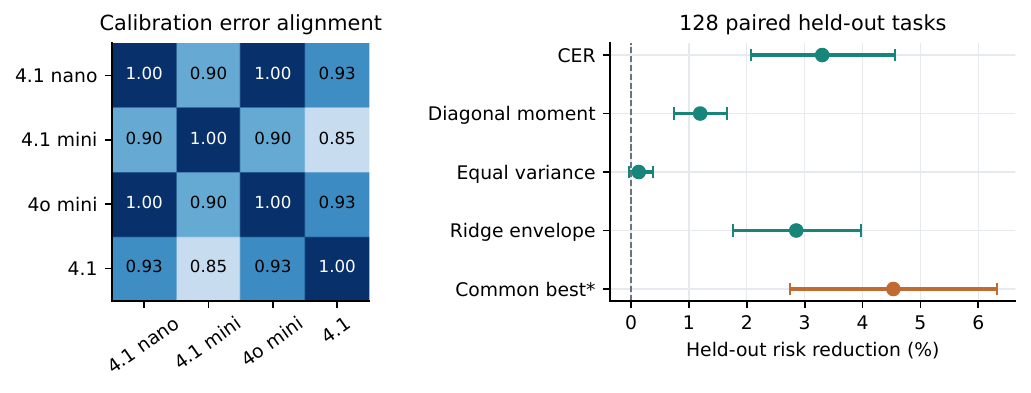}
\caption{Recorded-error replay. Left: uncentered calibration error alignment; rounding to 1.00 does not imply exact equality. Right: held-out reductions and paired task intervals. The asterisk denotes different endpoint counts. All four pinned model versions use one vendor.}
\end{figure}

\paragraph{Validity limit of the API evidence.}
The one-shot arithmetic performance is poor (Table~\ref{tab:models}); these runs are unsuitable as financial execution engines. Post-collection diagnostics increase the allowance to 1,000 tokens and use four transparent prompts per endpoint. All four endpoints answer \(2+2\) correctly; two solve a no-fee rebalancing example; none returns the specified cent-accurate result in the eight explicit-formula and worked-formula probes. The 16 records are saved separately. They do not explain the poor performance, but show that the primary completion limit is not the only issue.

The exact deterministic implementation of \eqref{eq:rebalance} solves the task with zero unrounded arithmetic error and dominates every model-only policy. The replay therefore supports the routing calculation on the recorded error panel, not a representative capability ranking, a case for LLM arithmetic, or a claim that deployed financial agents follow this error law. The mathematical and synthetic results do not depend on this panel.

\section{Operational interpretation and limitations}
CER applies most directly to a platform that knows the financial exposure attached to a decision, has approved several eligible endpoints, and can change routing during a disruption. A practical interface accepts holdings or response estimates, a validation error panel, eligibility, and capacities; it returns executable assignments, stress metrics, and relaxation gaps. This can support vendor-contingency review or an internal routing control. A commercial deployment would need reliable exposure data, contracted capacity, state-dependent validation, monitoring, and governance beyond this prototype.

Several boundaries prevent stronger conclusions. All balance sheets and impact coefficients here are synthetic. We estimate neither actual AI adoption nor real financial concentration. The shared-error model omits private prompts and institution-specific residuals. A routing-independent residual second moment uncorrelated with endpoint errors adds a constant; endpoint-dependent or institution-dependent residual quality requires an extended objective.

Second moments do not order tail probabilities for arbitrary distributions. The Gaussian and standardized Student-\(t_5\) checks in Appendix~\ref{app:tail} illustrate sensitivity, not a tail-risk theorem. Error moments can drift after an outage; the spectral envelope is a sensitivity device, not automatic statistical coverage. The quota heuristic gives local guarantees, and the lower bound can be loose for granular exposures. Finally, all measured endpoints use one API vendor. Column omission is not a simulation of independent cloud-provider failures.

\section{Conclusion}
Failover changes the joint allocation of financial errors. Primary endpoint counts and availability do not characterize the resulting concentration. A financial response matrix makes this dependence explicit, while the exact random-routing premium separates executable single-endpoint decisions from fractional allocation. The experiments show exposure-allocation gains under matched counts and sharply smaller gains for common errors. The result is a reproducible method for evaluating contingent routing under declared assumptions. Empirical calibration and independent validation remain necessary before using it to set operational risk limits.

\paragraph{Code and data availability.}
The \href{https://github.com/shi1720/contingent-exposure-routing}{companion repository on GitHub} contains the Python implementation, tests, locally frozen protocol, constructed prompts, raw responses, processed moments, numerical results, figure scripts, and manuscript source. Offline reproduction makes no API calls. New collection requires an explicit paid-execution flag and a separately supplied credential. No credential is included. Results refer to the archived panel; a remote model rerun need not return identical text.

\paragraph{Declarations.}
OpenAI Codex assisted with literature review, mathematics, code, experiments, and drafting. The author assumes responsibility for all content.

\bibliographystyle{plainnat}
\bibliography{references}
\clearpage
\appendix

\section{Additional experimental details}\label{app:details}
\subsection{Task distribution and model outcomes}
NAV is uniform on \$1,000 increments from \$20,000 to \$500,000. Initial risky exposure ranges from 5\% to 145\% of NAV in one-point increments; targets range from 10\% to 140\%. Fees are uniform on \(\{5,15,35,75,150,300\}\) basis points. Negative cash denotes borrowing. Wording templates cycle by index; the calibration/test permutation is independent of outcomes.

\begin{table}[h]
\centering\small
\caption{Held-out one-shot calculations. Errors are basis points of initial NAV after zero-trade fallback. Valid means a parsed finite number with normal completion, not a correct answer. Poor task performance limits the replay's interpretation.}
\label{tab:models}
\begin{tabular}{lrrrr}
\toprule
Endpoint & Valid & RMSE (bp) & MAE (bp) & Within 1 bp\\
\midrule
GPT-4.1-nano & 123/128 & 5211.1 & 4218.1 & 0/128 \\
GPT-4.1-mini & 128/128 & 4563.3 & 3802.0 & 0/128 \\
GPT-4o-mini & 127/128 & 5345.8 & 4293.4 & 1/128 \\
GPT-4.1 & 107/128 & 4484.6 & 3462.6 & 0/128 \\
 
\bottomrule
\end{tabular}
\end{table}

Post-collection diagnostics are an integrity check, not an alternative benchmark selected for better results. Their prompts, expected values, responses, finish reasons, and identifiers are in \path{data/raw/api-sanity.jsonl}. Primary data are unchanged. Primary and diagnostic requests total 1,041 HTTP calls; the quoted primary cost excludes the 16 diagnostics.

\subsection{Feedback and granularity}
At zero endpoint correlation, mean single/double-removal reductions are 7.30\%/12.33\% for \(\beta=0\), 6.57\%/10.53\% for \(\beta=0.4\), and 6.35\%/9.36\% for \(\beta=0.75\). Greater feedback need not increase the relative gain, because it changes the exposure geometry and baseline risk. Absolute risk depends on feedback and participation scale.

The granularity study uses the same size and portfolio generator, no fixed primary assignments, no quotas, four eligible endpoints, and \(\beta=0.4\). It isolates implementability. Continuous and independent-random risks are analytical; error bars in Figure~\ref{fig:round} summarize seed variation rather than shock simulation uncertainty.

\section{Secondary tail checks}\label{app:tail}
A secondary simulation uses network seed 1721406, \(\beta=0.4\), and removal of endpoints 0 and 1. At each \(\gamma\in\{0,0.3,0.8\}\), it draws 100,000 common random vectors for all policies. Gaussian vectors have second moment \(S\). Dividing them by \(\sqrt{U/3}\), with independent \(U\sim\chi^2_5\), gives Student vectors with the same second moment. These checks were added after the primary evaluation and are diagnostic.

The tail quantity is \(\left|d^{-1}\one^\top x\right|\), absolute mean asset displacement in normalized units. Empirical ES97.5 is the mean at or above its empirical 97.5\% quantile. It is not expected shortfall of a calibrated investment portfolio.

\begin{figure}[h]
\centering\includegraphics[width=.88\linewidth]{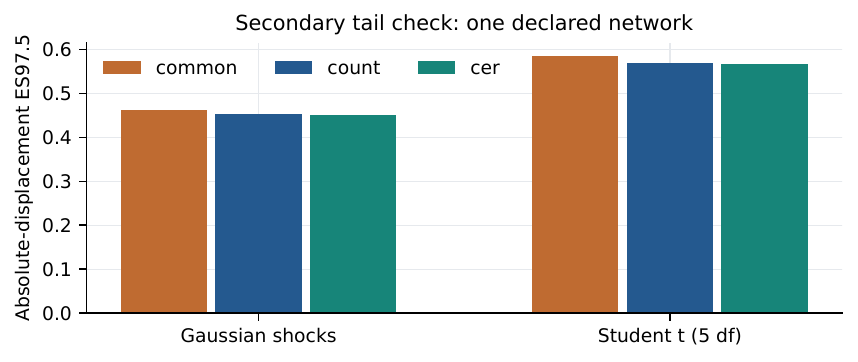}
\caption{Secondary absolute-displacement tail check for one declared network, with independent errors. Heavy tails raise ES despite unchanged second moments. CER's ES improvement here is smaller than its quadratic-risk improvement.}
\end{figure}
At \(\gamma=0\), count-balanced and CER theoretical quadratic risks are 0.41745 and 0.39745. Gaussian estimates are 0.41809 and 0.39777, with standard errors 0.00177 and 0.00172. Their ES values are 0.45436 and 0.45142 under Gaussian shocks, versus 0.56979 and 0.56726 under Student shocks. The complete table includes all correlations.

\section{Independent checks and reproduction}
The exhaustive checker evaluates
\(\sum_{i,\ell}(a_i^\top a_\ell)S_{J_iJ_\ell}\) directly over all \(3^4\) assignments on 12 random instances, without reusing the reduced random-routing formula. Ten small problems enumerate \(3^6\) assignments subject to a fixed row and compare the lower bound to the best integral objective. Another example enumerates all equal-quota assignments. Accounting tests substitute decimal ground truth into the original NAV equation for all 256 tasks.

The resolvent is checked against 1,000 cascade iterations at radii 0, 0.4, 0.75, and 0.95. Swap tests independently scan terminal assignments for improving legal pairs and verify fixed rows, endpoint counts, and objective changes. Other checks cover perfectly common errors, orthogonal exposures, the granularity identity, and the spectral envelope. These are implementation checks, not substitutes for proofs.

Principal offline commands are:
\begin{lstlisting}[basicstyle=\ttfamily\small,breaklines=true,frame=single]
python -m pytest -q
python scripts/run_synthetic.py
python scripts/analyze_api.py
python scripts/run_tail_checks.py
python scripts/summarize.py
python scripts/make_tables.py
python scripts/make_figures.py
cd paper && tectonic main.tex
\end{lstlisting}
The locked environment, seeds, generated tables, and checksums accompany the artifact. Synthetic results are deterministic conditional on the numerical environment; solver tolerances can cause small platform-dependent differences. Bootstrap intervals describe only their stated resampling distributions. They are not formal finite-sample guarantees or probabilities that the modeling assumptions hold in practice.
\end{document}